\documentclass{article}

\usepackage{amsthm}              
\usepackage{hyperref}
\usepackage{makeidx}         
\usepackage{graphicx}        
\usepackage{multicol}        
\usepackage{multirow}
\usepackage[bottom]{footmisc}
\usepackage{comment}
\usepackage{booktabs}
\usepackage[margin=3cm]{geometry}

\usepackage{newtxtext}       %
\usepackage[varvw]{newtxmath}       
\usepackage{pgffor}

\usepackage{subfigure}

\usepackage[capitalize]{cleveref}


\foreach \a in {q,w,e,r,t,y,u,i,o,p,a,s,d,f,g,h,j,k,l,z,x,c,v,b,n,m,Q,W,E,R,T,Y,U,I,O,P,A,S,D,F,G,H,J,K,L,Z,X,C,V,B,N,M}{  \expandafter\xdef\csname v\a\endcsname {{\noexpand\mathbf{\a}}}}

\newcommand{\vtheta}{\boldsymbol{\theta}}
\newcommand{\Id}{\vI_{\rm d}}

\newcommand{\RR}{\mathbb{R}}

\newcommand{\lp}{\left(}
\newcommand{\rp}{\right)}

\newcommand{\sign}{\ensuremath{\text{\rm sign}}}
\newcommand{\minimize}[2]{\ensuremath{\underset{\substack{{#1}}}%
{\text{\rm minimize}}\;\;#2 }}
\newcommand{\rhinge}{\rho_{\rm hinge}}

\usepackage{xcolor}
\newcommand{\AB}[1]{\textcolor{black}{#1}}

\newcommand{\GF}[1]{\textcolor{black}{#1}}


\usepackage{algorithm}
\usepackage{algorithmic}

\usepackage{url}
\newtheorem{remark}{Remark}
\newtheorem{definition}{Definition}
\newtheorem{proposition}{Proposition}
\newtheorem{theorem}{Theorem}

\title{Majorization-Minimization for sparse SVMs}
\usepackage[affil-it]{authblk}
\author[1]{A. Benfenati}
\author[2]{E. Chouzenoux}
\author[3]{G. Franchini}
\author[4]{S. Latva--{\"A}ij{\"o}}
\author[5]{D. Narnhofer}
\author[6]{J.--C. Pesquet}
\author[7]{S. J. Scott}
\author[8]{M. Yousefi}
\affil[1]{Environmental and Science Department, Via Celoria 2, 20133, Milano, Italy.}
\affil[2,6]{CVN, Inria, CentraleSupélec, University Paris Saclay, 9 rue Joliot Curie, 91190, Gif-sur-Yvette}
\affil[3]{Department of Physics, Informatics and Mathematics, Via Campi 213/B, 41125, Modena, Italy.}
\affil[4]{Department of Mathematics and Statistics, Pietari Kalmin katu 5, 00014 Helsinki, Finland}
\affil[5]{Institute for Computer Graphic and Vision, Graz University of Technology, Inffeldgasse 16, 8010 Graz, Austria}
\affil[7]{Department of Mathematical Sciences, University of Bath, BA2 7AY, United Kingdom}
\affil[8]{Department of Mathematics and Geosciences, University of Trieste, via Valerio 12/1, 34127 TS, Italy}

\date{}
\begin{document}

\maketitle
\abstract{Several decades ago, Support Vector Machines (SVMs) were introduced for performing binary classification tasks, under a supervised framework. Nowadays, they often outperform other supervised methods and remain one of the most popular approaches in the machine learning arena. In this work, we investigate the training of SVMs through a smooth sparse-promoting-regularized squared hinge loss minimization. This choice paves the way to the application of quick training methods built on majorization-minimization approaches,  benefiting from the Lipschitz differentiabililty of the loss function. Moreover, the proposed approach allows us to handle sparsity-preserving regularizers promoting the selection of the most significant features, so enhancing the performance. Numerical tests and comparisons conducted on three different datasets demonstrate the good performance of the proposed methodology in terms of qualitative metrics (accuracy, precision, recall, and F$_1$ score) as well as computational cost.}

\section{Introduction}
\label{sec:intro}

Support Vector Machines (SVMs) are well-tailored for regression and classification applications. They were introduced in the seminal work \cite{Cortes95} for supervised learning. In addition to being grounded on sound optimization techniques \cite{platt1998sequential,vapnik1999nature}, various  extensions of them can be performed. They remain one of the most widely used methods in classification tasks despite the increasing \GF{role} played by neural networks. As linear classifiers, SVMs have been shown to outperform many supervised methods \cite{LIANG2017126,cervantes2015data,naik2017online}. Real-world applications include image classification \cite{7882747}, face detection \cite{tsai2018facial,ZHAN201619}, hand-written character recognition \cite{9339435}, melanoma classification \cite{afifi2019system,afifi2020dynamic}, text categorization \cite{pinheiro2019combining,7738833}. The interested reader can find a complete review in \cite{CERVANTES2020189}.

The supervised learning problem in the SVM framework consists in minimizing a suitable function measuring the distance between the predicted and the true labels corresponding to a dataset sample. This minimization is carried out with respect to the SVM model parameters. The SVM training problem may be formulated as a quadratic programming one \cite{zanghirati2003parallel}. It may involve least squares loss (hard--margin SVM) under a suitable constraint \cite{shalev2014understanding}, or the hinge loss \cite{7738833}.

For SVM training, the optimization problem can be solved via Lagrangian duality approaches \cite{Cortes95, 6976941}, naturally leading to some clever strategies such as kernel tricks \cite{5947214}, splitting the problem into simpler subproblems \cite{6976941}, cutting plane procedures \cite{CHU2017127}. In several applications, it is common to promote sparsity on the SVM parameters. This is equivalent to implicitly enforcing \textit{feature selection}, meaning that only the features that are essential to a task are kept, and those that are useless or even result in noisy solutions are dropped. A classification task with a limited or severely unbalanced dataset, facing the so-called overfitting problem, is a \GF{standard} scenario in which this sparsity condition is required. In such a case, \GF{standard} (nonsparse) SVMs-based models,  which are particularly tailored for specific datasets, might not be able in generalization to adapt properly to unseen data. Introducing regularization to achieve this property in the training loss is the most popular method for inducing sparsity on SVM parameters.

The most specifically designed functionals that impose sparsity on a solution are the so-called $\ell_0$-norm\footnote{Recall that actually this function is not a proper norm.} \cite{weston2003use}, and its well-known convex relaxation, i.e., $\ell_1$-norm \cite{bradley1998feature}. Other functionals,
including $\ell_{1,p}$ norms, or elastic-net functionals, have also been shown to effectively promote sparse regularization, see e.g.\cite{rosasco2013nonparametric,9447785}. To effectively accommodate the additional penalty term, the modification of a training algorithm must be considered. For example, the SVM-based objective function involving the hinge loss and a $\ell_{1,p}$-norm can be efficiently minimized by primal-dual methods \cite{7738833}.

The presented work focuses on training SVMs when we employ the squared hinge loss as a data fidelity function, coupled with a smooth sparsity-promoting regularization functional. In this way, as we will show hereafter, the loss function is Lipschitz differentiable. This paves the way to the application of fast training techniques. This work investigates first-order methods such as the gradient descent algorithm and discusses its acceleration via Majorization--Minimization (MM) techniques.

\emph{\textbf{Contribution.}} The focus of this work is on training an SVMs-based model using a smooth regularization functional that promotes sparsity and the squared hinge loss as a data fidelity function. This makes the loss function Lipschitz differentiable, as we demonstrate, and allows fast training techniques. Moreover, this work explores and analyzes how Majorization-Minimization (MM) strategies can speed up first-order methods.

\emph{\textbf{Outline.}} This paper is organized as follows. Together with the regularization functionals taken into consideration in this work, the problem is formulated in \Cref{sec:problem}. \Cref{sec:MP} presents the theoretical foundations of MM methods and \Cref{sec:MMalg} depicts the MM-based algorithms as well as their practical implementation in our context. The goal of \Cref{sec:numexp} is to assess the performance of the suggested procedures. Extensive comparisons are conducted with respect to state-of-the-art training algorithms. Finally, \Cref{sec:concl} draws conclusions.

\emph{\textbf{Notation.}} Bold letters denote vectors, while bold capital letters denote matrices. Greek and italic Latin letters refer to scalars. $\RR^n$ is the real Euclidean space of dimension $n$, $\RR^{m\times n}$ denotes the real space of matrices with $m$ rows and $n$ columns. For $\vx\in\RR^n$ $\|\vx\|$ denotes the classical Euclidean (or $\ell_2$) norm of the vector. For a matrix $\vA$, $\|\vA\|$ is the spectral norm of $\vA$. The function $\mathsf{1}_\Omega$ denotes the binary indicator of the set $\Omega$, $\mathsf{1}_{\Omega}(x)=1$ if $x\in\Omega$, 0 otherwise.

\section{Problem Formulation}
\label{sec:problem}
The problem addressed in this work is binary classification. Given a new observation $\vx\in\RR^n$, which contains the describing $n$ scalar features, the aim is to categorize $\vx$ into one of two classes. In this section, we present the two main components of the mathematical model we propose to solve this task: the SVM data fidelity term, namely here, the squared hinge loss, and the regularization functionals that promote sparsity. 

\subsection{SVM loss function}
The mathematical model for the categorization of the observation $\vx$ into two classes encompasses a linear classifier
\begin{equation}
\label{eq:linClass}
\begin{aligned}
M: \RR^n&\to \{-1,1\}\\
\vx &\to \sign(\vw^\top\vx+\beta)
\end{aligned}
\end{equation}
where $\vw\in\RR^n$ and $\beta\in\RR$. The classifier in \eqref{eq:linClass} defines a separating hyperplane whose purpose is to distinguish between items belonging to different classes, see e.g. \cref{fig:linclassEX} for a 2D example. The output of $M$ in \cref{eq:linClass} will be then $1$ or $-1$, and these two labels correspond to the categorization in one of the classes.
\begin{figure}[htbp]
	\centering
	\includegraphics[width=0.5\textwidth]{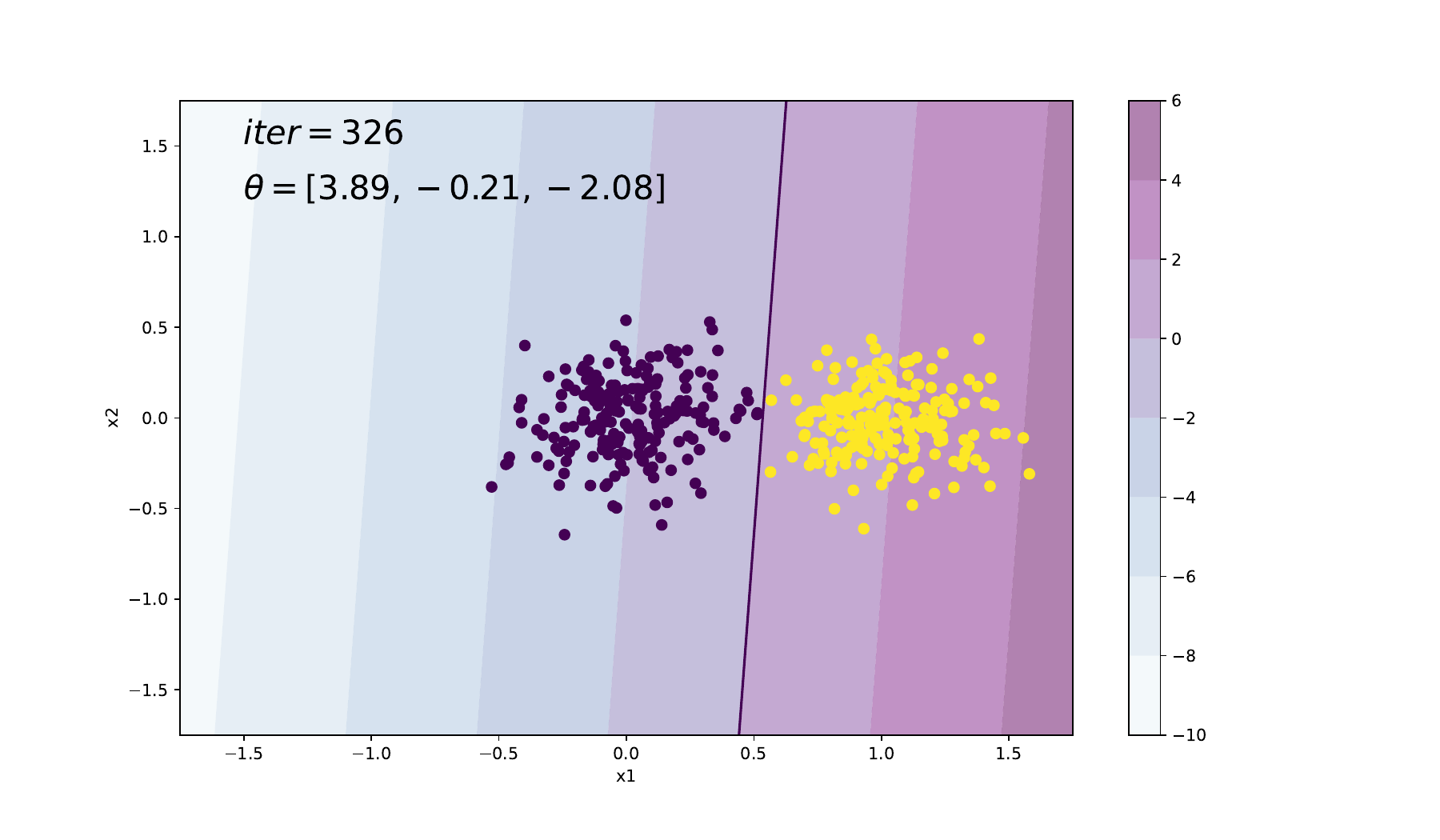}
	\caption{Toy example of a linear classifier. The line easily separates the two classes.}
	\label{fig:linclassEX}
\end{figure}
The parameters $\vw$ and $\beta$ in \eqref{eq:linClass} must be estimated during a training phase. Given a dataset $\{(\vx_k,y_k)\}_{k=1,\dots,K},\, \vx_k\in\RR^n,\, y_k\in\{-1,1\}$, and where $\vx_{k,i}$ denotes the $i$--th feature of the $k$--th sample, the ideal training loss would consist in the misclassification count
\begin{equation}
\label{eq:misclassCount}
\ell\lp M(\vx_{k}),y_{k}\rp =\frac{1-y_{k} M(\vx_{k})}{2} = \rho\lp y_{k}(\vw^\top \vx_{k}+\beta)\rp
\end{equation}
where
$$
(\forall \upsilon \in \RR)\qquad \rho(\upsilon) = \frac{1-\sign(\upsilon)}{2}.
$$
The training would then be carried on by solving the following optimization problem:
\begin{equation}
\label{eq:optPB}
\minimize{\vw\in\RR^n,\, \beta\in\RR}{\sum_{k=1}^K \rho \lp  y_{k}(\vw^\top \vx_{k}+\beta)\rp}.
\end{equation}
Unfortunately, \eqref{eq:optPB} reveals to be a difficult nonconvex problem. To overcome this issue, a popular choice is to subsitute the hinge loss $\rhinge$  for $\rho$ in \eqref{eq:misclassCount}, where
\begin{equation}
\label{eq:hingeLoss}
(\forall \upsilon \in \RR)\qquad \rhinge(\upsilon) = \max\{1-\upsilon,0\}.
\end{equation}

Function \eqref{eq:hingeLoss} provides the minimal convex upper bound of the misclassification count function, see \cref{fig:hingeLoss} for a visual inspection.
\begin{figure}
	\centering
	\includegraphics[width=0.6\textwidth]{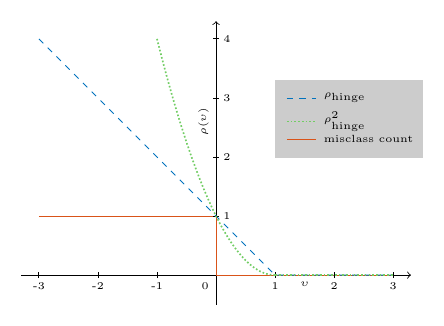}
	\caption{Hinge loss function (cyan) versus misclassification count function (orange) for the case in which the true label is 1. The hinge loss strongly penalizes uncorrected labels less than 0, while it assumes low values for outputs in $[0,1]$. Obviously, when the classifier provides the correct label, the loss is zero. }
	\label{fig:hingeLoss}
\end{figure}

\begin{remark}
	When the two classes are nonempty and separable, the goal of the training is to find a separating hyperplane such that
	$$
	\begin{cases}
	\vw^\top \vx_{k}+\beta > 0 & \mbox{if $y_{k}=1$}\\
	\vw^\top \vx_{k}+\beta < 0 & \mbox{if $y_{k}=-1$} 
	\end{cases}
	$$
	In the case of nonseparable classes, one should employ a slack variable:
	$$
	\begin{cases}
	\vw^\top \vx_{k}+\beta \ge 1-\xi_{k} & \mbox{if $y_{k}=1$}\\
	\vw^\top \vx_{k}+\beta \le -1+\xi_{k} & \mbox{if $y_{k}=-1$} 
	\end{cases}
	$$
	which has the following interpretation:
	$$
	\ell(M(\vx_{k}),y_{k})  = \min_{\xi_{k}\in [0,+\infty]} \xi_{k} \quad \text{s.t.}\quad y_{k}(\vw^\top \vx_{k}+\beta) \ge 1-\xi_{k}.
	$$
\end{remark}

The hinge loss in \eqref{eq:hingeLoss} is convex but not differentiable and it may cause numerical issues around $\upsilon=1$. The training can be performed by using primal--dual methods for solving the related optimization problem \cite{7738833}, which are usually costly and might lack flexibility. To overcome this issue, we focus instead on the squared hinge loss
\begin{equation}
\label{eq:squaredHingeLoss}
(\forall \upsilon \in \RR)\qquad \rhinge^2(\upsilon) = \max\{(1-\upsilon)^2,0\}.
\end{equation}
Function \eqref{eq:squaredHingeLoss} is convex and differentiable on the entire domain. Moreover, it has a 2--Lipschitz gradient, which is a useful property when dealing with optimization problems. A possible drawback is that the squared hinge loss might be more sensitive than the hinge function with respect to larger errors (see \Cref{fig:hingeLoss} for comparison).

\subsection{Regularization}
Our work focuses on the regularized version of the squared-hinge loss problem:
\begin{equation}
\label{eq:optProbSqHinge}
\minimize{\vw \in \RR^{N},\,\beta \in \RR}{\sum_{k=1}^K \rhinge^2\big(y_{k}(\vw^\top \vx_{k}+\beta)\big)+ f(\vw)}.
\end{equation}
Various choices can be made for function $f$, to favor the sparsity of the solution. Below, we list some examples covered by our approach.
Namely, we consider
\begin{equation}
\label{eq:fReg}
\lp\forall\, \vw = (w_i)_{1 \leq i \leq N} \in \mathbb{R}^N\rp \quad
f(\vw) = \sum_{i=1}^N \varphi(w_i) + \frac{\eta}{2} \| \vw\|^2,
\end{equation}
where $\varphi: \mathbb{R} \to \mathbb{R}$ is a potential function and $\eta \geq 0$, for which we introduce the following requirements:
\begin{enumerate}
	\item $\varphi$ is even ;
	\item $\varphi$ is differentiable on $\mathbb{R}$ ;
	\item $\varphi\lp\sqrt{\cdot}\rp$ is concave on $[0,+\infty[$.
\end{enumerate}
This framework is rather versatile, as it allows us to consider several interesting choices, such as smooth approximations for the $\ell_1$ norm  or for the $\ell_0$ pseudo--norm. For $\varphi \equiv 0$, we retrieve the standard quadratic penalty often used in SVMs. When $\eta\neq0$ and $\varphi$ is a sparse promoting term, $f$ can be viewed as an elastic-net penalty. See \cref{fig:regFunctions} for a visual inspection. Typically, we can use the so-called hyperbolic potential defined, for $\lambda \geq 0$, by
\begin{equation}
\label{eq:l1Smooth}
(\forall w \in \mathbb{R}) \quad
\varphi(w) = \lambda \sqrt{w^2+\delta^2}, \, \delta>0.
\end{equation}
Function~\eqref{eq:l1Smooth} is a convex function approximating $w \mapsto \lambda |w|$. Another choice is the Welsh potential
\begin{equation}
\label{eq:l0Smooth}
(\forall w \in \mathbb{R}) \quad
\varphi(w) = \lambda \lp 1-\exp\left(-\frac{w^2}{2 \delta^2}\right)\rp, \, \delta>0,
\end{equation}
Function \eqref{eq:l0Smooth} is nonconvex and approximates the binary indicator function
$$
w \mapsto \lambda \mathsf{1}_{w\neq 0}.
$$

\begin{figure}
	\centering
	\subfigure[]{\includegraphics[width=0.45\textwidth]{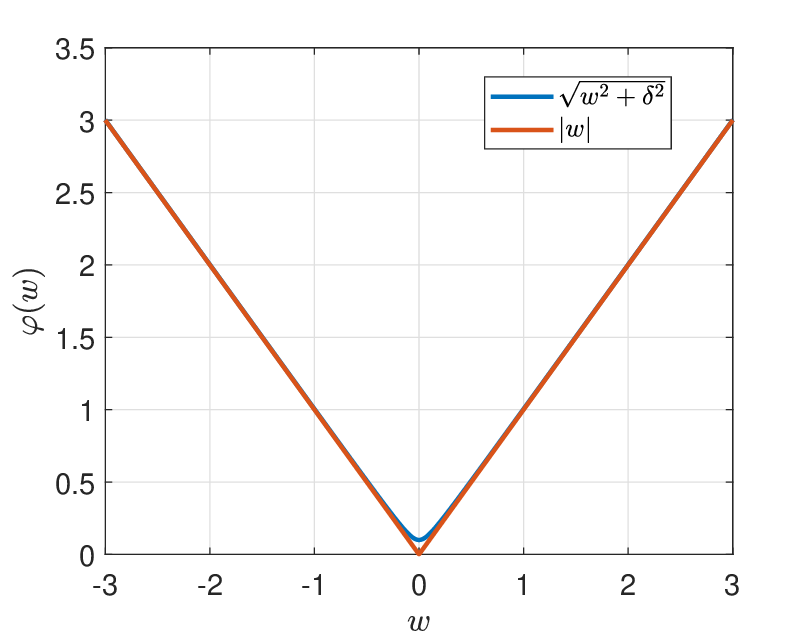}} \hfill 
	\subfigure[]{\includegraphics[width=0.45\textwidth]{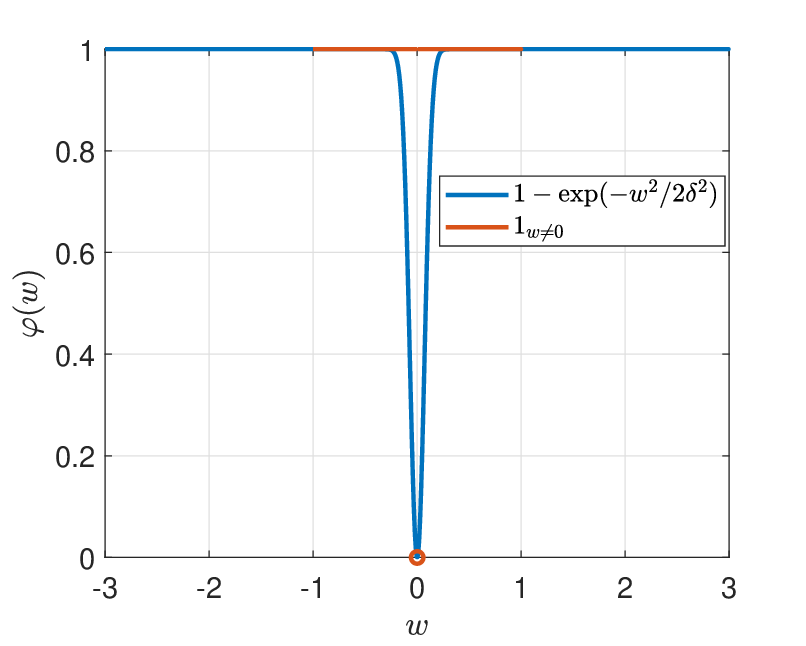}} 
	\caption{(a) Absolute value and its smooth approximation with hyperbolic potential. (b) Binary indicator function and its smooth approximation with Welsh potential.} 
	\label{fig:regFunctions}
\end{figure}

\subsection{General formulation}

Problem \eqref{eq:optProbSqHinge} can be reformulated as in 
the following way:

\begin{equation}
\label{eq:sqHingeRef}
\minimize{\vtheta \in \RR^{N+1}}{(g(\vL\vtheta)+\tilde{f}(\vtheta) \equiv \Phi(\vtheta))}
\end{equation}
where
\begin{itemize}
	\item $\vtheta = [\vw^\top\;\;\beta]^\top\in \RR^{N+1}$
	\item $\vL = \mathrm{Diag}(y_1,\cdots,y_K)  \begin{bmatrix}
	\vx_1^\top & 1\\
	\vdots & \vdots\\
	\vx_K^\top &  1
	\end{bmatrix} \in \RR^{K\times (N+1)}$
	\item[]
	\item $\lp\forall\, \vv = (v_k)_{1\le k \le K}\rp$
	$\displaystyle g(\vv) = \sum_{k=1}^K$ $\rhinge^2(v_k)$
	\item $\displaystyle \tilde{f}(\vtheta) = f(\vw)$.
\end{itemize}
Note that the regularization term only affects the variable $\vw$ and not the bias $\beta$.
Function $\Phi$ involved in \eqref{eq:optProbSqHinge} is differentiable on $\mathbb{R}^{N+1}$. Its gradient reads
\begin{equation}
\label{eq:gradient}
\lp\forall\, \vtheta \in \mathbb{R}^{N+1}\rp \quad 
\nabla \Phi(\vtheta) =  \vL^\top \nabla g(\vL\vtheta)
+ \nabla \tilde{f}(\vtheta).
\end{equation}
The derivative of the squared hinge loss is
\begin{equation}
\label{eq:gradientsqhinge}
(\forall \vv \in \mathbb{R}^K) \quad 
\nabla g(\vv) = \left( \max(2 (v_k - 1), 0) \right)_{1 \leq k \leq K}.
\end{equation}
Moreover, 
\begin{equation}
\label{eq:gradientreg}    (\forall \vtheta \in \mathbb{R}^{N+1}) \quad 
\nabla \tilde{f}(\vtheta) = \left(\begin{array}{c}\varphi'(w_1) + \eta w_1\\\vdots \\ \varphi'(w_N) + \eta w_N\\ 0 \end{array}  \right),
\end{equation}
with $\varphi'$ as the derivative of the potential function $\varphi$ involved in the construction of the regularization term $f$. In particular, for the hyperbolic potential function \eqref{eq:l1Smooth},
\begin{equation}
(\forall w \in \mathbb{R}) \quad \varphi'(w) = \lambda \frac{w}{\sqrt{w^2+ \delta^2}}.
\end{equation}
while, for the Welsh potential \eqref{eq:l0Smooth},
\begin{equation}
(\forall w \in \mathbb{R}) \quad
\varphi'(w) =\lambda \frac{w}{\delta^2} \exp\left(- \frac{w^2}{2 \delta^2}\right)
\end{equation}

In Section \ref{sec:MP}, we give some important additional properties of function $\Phi$. Then, in Section \ref{sec:MMalg}, we provide a family of algorithms based on the MM principle to solve Problem \eqref{eq:sqHingeRef}.

\section{Majorization Properties}
\label{sec:MP}
This section is devoted to presenting a key tool as the core of the training algorithms proposed in this work, namely the MM technique and the underlying concept of majorizing approximation. The MM technique consists of alternating between two steps to solve an initial complex optimization problem. The first step involves computing the tangent majorant of the objective function, and the second is to minimize that majorant in order to progressively converge to a reliable minimizer of the original function. The definition of a majorant function for the function $\Phi$ in \eqref{eq:sqHingeRef} is given in the following.

\begin{definition}
	\label{def:majorant}
	A tangent majorant $h(\cdot;\vtheta'):\RR^{N+1} \to\RR$ of $\Phi$ at $\vtheta' \in \RR^{N+1}$ is a function such that
	\begin{eqnarray*}
		&& h(\vtheta;\vtheta')\geq \Phi(\vtheta) \quad(\forall\, \vtheta\in \RR^{N+1})\\
		&& h(\vtheta';\vtheta')=\Phi(\vtheta')
	\end{eqnarray*}
\end{definition}
The general MM iterative scheme then reads:
\begin{equation}
(\forall n \in \mathbb{N}) \quad \vtheta^{(n+1)} = \text{argmin}_{\vtheta \in \RR^{N+1}} h(\vtheta;\vtheta^{(n)}),
\end{equation}
with some initialization $\vtheta^{(0)} \in \RR^{N+1}$. Under suitable hypotheses on the loss function in \eqref{eq:sqHingeRef} and its majorizing approximations, the iterative scheme leads to a sequence converging to a solution \cite{Jacobson07}. 

Let us now discuss the construction of  reliable majorizing approximations for the considered function $\Phi$.

\subsection{Descent lemma majorant}
\begin{proposition}
	\label{prp:sHingeLip}
	Assume that $\varphi$ is $a$-Lipschitz differentiable on $\mathbb{R}$, with $a>0$. Then, function $\Phi$ involved in \eqref{eq:sqHingeRef} is $\mu$-Lipschitz differentiable on $\mathbb{R}^{N+1}$ with
	\begin{equation}
	\mu = 2 \|\vL\|^2 + a + \eta.
	\end{equation}
	As a consequence, for every $\vtheta' \in \mathbb{R}^{N+1}$, the following function is a tangent majorant of $\Phi$ at $\vtheta'$,
	\begin{equation}
	(\forall \vtheta \in \mathbb{R}^{N+1}) \quad
	h(\vtheta,\vtheta') =   \Phi(\vtheta') + \nabla    \Phi(\vtheta')^\top (\vtheta - \vtheta') + \frac{\mu}{2} \|\vtheta - \vtheta'\|^2. 
	\end{equation}
\end{proposition}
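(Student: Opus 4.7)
The plan is to establish the Lipschitz differentiability of $\Phi$ by decomposing it as $\Phi = g \circ \vL + \tilde{f}$ and bounding the Lipschitz modulus of $\nabla \Phi$ by the triangle inequality. The tangent majorant claim will then follow immediately from the standard descent lemma applied to the resulting $\mu$-smooth function.

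First, I would show that $\nabla g$ is $2$-Lipschitz on $\mathbb{R}^K$. Since $g$ is separable, it suffices to check each coordinate: from \eqref{eq:squaredHingeLoss}, $\rhinge^2$ is $C^1$ with derivative equal to a shifted ReLU, which is piecewise linear with slope $0$ or $2$ and is therefore $2$-Lipschitz on $\mathbb{R}$. Separability then yields $\|\nabla g(\vv_1)-\nabla g(\vv_2)\|\le 2\|\vv_1-\vv_2\|$ for all $\vv_1,\vv_2 \in \mathbb{R}^K$. Applying this with $\vv_i = \vL \vtheta_i$, and using the submultiplicativity of the spectral norm together with $\|\vL^\top\| = \|\vL\|$, I obtain
\begin{equation*}
\|\vL^\top \nabla g(\vL \vtheta_1) - \vL^\top \nabla g(\vL \vtheta_2)\| \le 2\|\vL\|^2 \,\|\vtheta_1 - \vtheta_2\|.
\end{equation*}

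Next I would handle $\nabla \tilde{f}$. From expression \eqref{eq:gradientreg}, its nonzero components are $\varphi'(w_i) + \eta w_i$ for $i=1,\dots,N$, while the $(N+1)$-th component (corresponding to $\beta$) vanishes. Exploiting the separable structure and the $a$-Lipschitz continuity of $\varphi'$ componentwise, a straightforward sum-of-squares estimate gives $\|\varphi'(\vw_1) - \varphi'(\vw_2)\| \le a\|\vw_1-\vw_2\|$; adding the $\eta$-Lipschitz contribution of the identity term and observing $\|\vw_1 - \vw_2\| \le \|\vtheta_1 - \vtheta_2\|$ yields $\|\nabla \tilde{f}(\vtheta_1) - \nabla \tilde{f}(\vtheta_2)\| \le (a+\eta)\|\vtheta_1-\vtheta_2\|$. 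Combining the two bounds via the triangle inequality produces the stated Lipschitz constant $\mu = 2\|\vL\|^2 + a + \eta$.

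Finally, I would invoke the descent lemma: any function with an $L$-Lipschitz gradient on $\mathbb{R}^{N+1}$ satisfies the quadratic upper bound $\Phi(\vtheta) \le \Phi(\vtheta') + \nabla \Phi(\vtheta')^\top(\vtheta-\vtheta') + \tfrac{L}{2}\|\vtheta-\vtheta'\|^2$. Applied with $L = \mu$ this is precisely the majorization inequality, and equality at $\vtheta = \vtheta'$ is immediate by inspection, matching \Cref{def:majorant}. There is no genuine obstacle in the argument; the only point that warrants care is confirming the $2$-Lipschitz constant of $\nabla \rhinge^2$ at the nondifferentiability point $\upsilon = 1$ of $\rhinge$ itself, where one must verify that squaring restores $C^1$ regularity and yields a globally (not merely piecewise) Lipschitz derivative.
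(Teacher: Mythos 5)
Your argument is correct and is exactly the standard one this result relies on: the paper states \Cref{prp:sHingeLip} without giving a proof, and your decomposition of $\nabla\Phi$ via \eqref{eq:gradient} into the $2\|\vL\|^2$-Lipschitz term $\vL^\top\nabla g(\vL\,\cdot)$ and the $(a+\eta)$-Lipschitz term $\nabla\tilde f$, followed by the descent lemma, is precisely the intended route. The only cosmetic point is that the derivative of $\rhinge^2$ is $\upsilon\mapsto 2\min(\upsilon-1,0)$ (a reflected ReLU, so \eqref{eq:gradientsqhinge} has a sign typo), but it is still piecewise linear with slopes in $\{0,2\}$, so your Lipschitz constant $2$ and hence $\mu$ are unaffected.
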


{Note that functions  \eqref{eq:l1Smooth} and \eqref{eq:l0Smooth} are $a$-Lipschitz differentiable with $a = \frac{\lambda}{\delta}$ and $a = \frac{\lambda}{\delta^2}$, respectively.}

\subsection{Half-quadratic majorant}

The previous majorizing approximation is interesting but might lack accuracy, as its curvature does not depend on the tangency point $\vtheta'$. Hereafter, we propose a more sophisticated approximation, reminiscent of the constructions in half-quadratic algorithms for image processing \cite{ALLAIN}.

\begin{proposition}
	\label{prp:sHingeMM}
	For  every $\vtheta' \in \mathbb{R}^{N+1}$, the following function is a tangent majorant of function $\Phi$ involved in Problem~\eqref{eq:sqHingeRef}:
	\begin{equation}
	\label{eq:majorant}
	(\forall \vtheta \in \mathbb{R}^{N+1}) \quad
	h(\vtheta ; \vtheta') =    \Phi(\vtheta') + \nabla    \Phi(\vtheta')^\top (\vtheta - \vtheta') + \frac{1}{2} (\vtheta - \vtheta')^\top \vA(\vtheta')  (\vtheta - \vtheta')
	\end{equation}
	with, 
	\begin{equation}
	(\forall \vtheta \in \mathbb{R}^{N+1}) \quad \label{eq:CurvMat}
	\vA(\vtheta) = 2 \vL^\top \vL + \mathrm{Diag} \left(\left[ \begin{array}{c} \psi(\theta_1) + \eta \\ \vdots \\ \psi(\theta_N)  + \eta \\ \varepsilon \end{array}  \right] \right),
	\end{equation}
	with: $\psi: w \mapsto \varphi'(w)/w$ and $\varepsilon > 0$.
\end{proposition}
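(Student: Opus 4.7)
My plan is to exploit the additive decomposition $\Phi(\vtheta) = g(\vL\vtheta) + \sum_{i=1}^{N}\varphi(w_i) + \frac{\eta}{2}\|\vw\|^2$ and build a quadratic majorant of each summand at the base point $\vtheta'$; adding them will reconstruct exactly $h(\vtheta;\vtheta')$ with curvature $\vA(\vtheta')$ as in \eqref{eq:majorant}--\eqref{eq:CurvMat}. Tangency at $\vtheta'$ is automatic from the form of $h$ (the first-order part makes $h(\vtheta';\vtheta') = \Phi(\vtheta')$), so the entire content lies in the global upper bound $h(\vtheta;\vtheta') \ge \Phi(\vtheta)$, which I would prove piece by piece.

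For the data-fidelity composite $g \circ \vL$, I would use that $\nabla g$ is $2$-Lipschitz on $\RR^K$ since each coordinate $v\mapsto \rhinge^2(v)$ is convex with a $2$-Lipschitz derivative. The descent lemma applied to $g$ then gives $g(\vv)\le g(\vv') + \nabla g(\vv')^\top(\vv-\vv') + \|\vv-\vv'\|^2$, and setting $\vv=\vL\vtheta$, $\vv'=\vL\vtheta'$ together with $\|\vL(\vtheta-\vtheta')\|^2 = (\vtheta-\vtheta')^\top \vL^\top\vL(\vtheta-\vtheta')$ produces a majorant of $g\circ \vL$ whose curvature is exactly $2\vL^\top\vL$, matching the dense block of $\vA(\vtheta')$. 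For each potential $\varphi(w_i)$, I would invoke the concavity of $\Psi := \varphi(\sqrt{\cdot})$ on $[0,+\infty[$ (Assumption~3) together with the evenness of $\varphi$ (Assumption~1): concavity yields $\Psi(u) \le \Psi(u') + \Psi'(u')(u-u')$, and with $u = w_i^2$, $u' = (w_i')^2$, using $\Psi'((w_i')^2) = \varphi'(w_i')/(2w_i') = \psi(w_i')/2$ and $w_i^2 - (w_i')^2 = (w_i - w_i')^2 + 2w_i'(w_i - w_i')$, one rearranges into the half-quadratic bound
\[
\varphi(w_i)\le \varphi(w_i')+\varphi'(w_i')(w_i-w_i')+\tfrac{1}{2}\psi(w_i')(w_i-w_i')^2,
\]
which feeds $\tfrac{1}{2}\psi(\theta_i')$ into the $i$-th diagonal of $\vA(\vtheta')$.

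Finally, the ridge term $\frac{\eta}{2}\|\vw\|^2$ is its own exact second-order Taylor expansion at $\vw'$ and supplies the $\eta$ added to each of the first $N$ diagonal entries. Since neither $\varphi$ nor the ridge involves the bias $\beta = \theta_{N+1}$, no positive curvature in that coordinate is required for the majorant inequality; adding the slack $\frac{\varepsilon}{2}(\beta-\beta')^2$ with $\varepsilon > 0$ only enlarges the right-hand side, so the bound is preserved, and this choice has the further benefit of making $\vA(\vtheta')$ positive definite, which is useful for the subsequent MM update. Summing the three majorants then yields exactly $h(\vtheta;\vtheta')$ of \eqref{eq:majorant} with $\vA(\vtheta')$ of \eqref{eq:CurvMat}. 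The main delicate point I foresee is the half-quadratic bound at points where $w_i' = 0$: one must extend $\psi$ continuously by $\psi(0) := \lim_{w\to 0}\varphi'(w)/w$, a limit that is finite for the potentials of interest, e.g.\ for the hyperbolic and Welsh potentials \eqref{eq:l1Smooth} and \eqref{eq:l0Smooth}.
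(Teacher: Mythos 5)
Your proof is correct and follows essentially the route the paper implicitly relies on: the proposition is stated without proof, with a pointer to the half-quadratic literature, and your argument --- the descent lemma applied to the $2$-Lipschitz-gradient squared-hinge term to produce the $2\vL^\top\vL$ block, the Geman--Reynolds concavity bound $\varphi(w)\le\varphi(w')+\varphi'(w')(w-w')+\frac{1}{2}\psi(w')(w-w')^2$ derived from the concavity of $\varphi(\sqrt{\cdot})$ and evenness of $\varphi$, exactness of the quadratic expansion for the ridge term, and the harmless $\varepsilon$ slack on the bias coordinate --- is precisely that standard construction, assembled so that the pieces sum to \eqref{eq:majorant} with curvature \eqref{eq:CurvMat}. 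Your handling of the $w'=0$ case by extending $\psi$ with its finite limit (which exists for \eqref{eq:l1Smooth} and \eqref{eq:l0Smooth}) is also the right resolution of the only delicate point.
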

For the potential \eqref{eq:l1Smooth},
we have
\begin{equation}
(\forall w \in \mathbb{R}) \quad \psi(w) = \lambda \frac{1}{\sqrt{w^2 + \delta^2}},
\end{equation}
while, for \eqref{eq:l0Smooth},
\begin{equation}
(\forall w \in \mathbb{R}) \quad \psi(w) =  \frac{\lambda}{\delta^2} \exp \left(- \frac{w^2}{2 \delta^2} \right).
\end{equation}

\section{Training SVMs}
\label{sec:MMalg}
In this section, we present a set of MM-based strategies to solve optimization \eqref{eq:sqHingeRef}. First, using the descent lemma majorant, we describe a basic gradient descent algorithm with constant stepsize. Then, using a more sophisticated majorant construction, we derive an MM quadratic approach and provide a skillful strategy for the inversion of the majorant curvature. We also present a subspace acceleration of the aforementioned MM method. Finally, we discuss the stochastic implementation of the training methods and propose a set of hybrid methods with fast convergence in both warm-up and asymptotic regimes.

\subsection{Gradient Descent Approach}
The iterative procedure reads as 
\begin{align}
\label{eq:gradientDescent}
(\forall n \in \mathbb{N})\quad 
\vtheta^{(n+1)} &= \vtheta^{(n)} -\alpha \nabla \Phi(\vtheta^{n)}),
\end{align}
with $\vtheta^{(0)} \in \mathbb{R}^N$. The iterates produced by \eqref{eq:gradientDescent} are guaranteed to converge to a stationary point of \eqref{eq:sqHingeRef} for $\alpha \in ]0,2/\mu[$, where $\mu$ is defined in~\Cref{prp:sHingeLip}. 
\subsection{MM Quadratic Approach}

The gradient descent method is often characterized by a slow convergence. Improved performance can be obtained by the MM quadratic scheme based on~\Cref{prp:sHingeMM}:
\begin{align}
(\forall n \in \mathbb{N})\quad 
\vtheta^{(n+1)} &= \text{argmin}_{\vtheta} \left( \nabla    \Phi(\vtheta^{(n)})^\top (\vtheta - \vtheta^{(n)}) + \frac{1}{2} (\vtheta - \vtheta^{(n)})^\top \vA(\vtheta^{(n)})  (\vtheta - \vtheta^{(n)})\right) \nonumber \\
& = \vtheta^{(n)} -  (\vA(\vtheta^{(n)}))^{-1} \nabla \Phi(\vtheta^{n)}).
\label{eq:fullMM}
\end{align}
The iterative scheme \eqref{eq:fullMM}, related to half-quadratic techniques popular in imaging \cite{ALLAIN}, can be viewed as a preconditioned gradient algorithm. The practical implementation and acceleration of this scheme are discussed below.

\subsubsection{Numerically inverting the majorant curvature}

The computation of the inverse of $\vA(\vtheta^{(n)})$ at each iteration, in \eqref{eq:fullMM}, might be time-consuming.  We propose an approach for computing the product\linebreak $(\vA(\vtheta^{(n)}))^{-1}\nabla\Phi(\vtheta^{(n)})$,  without explicitly constructing the inverse of the matrix. Referring to \eqref{eq:CurvMat}, we majorize the curvature matrix as follows
\begin{equation}
(\forall \vtheta \in \mathbb{R}^{N+1})\quad 
\mathbf{A}(\vtheta) \preceq \mathbf{\overline{A}}(\vtheta) = 2\vL^\top \vL + \AB{\sigma_{\rm max}}(\vtheta)\Id,
\end{equation}
where
\begin{equation} \label{eq:eq:lambdaMax}
\AB{\sigma_{\rm max}}(\vtheta) = \max\left\{\psi(\theta_1) + \eta,\ldots,\psi(\theta_N) + \eta,\varepsilon \right\}.
\end{equation}

Suppose $\vL^\top = \mathbf{QR}$ is the QR factorization of \GF{$\vL^\top \in \mathbb{R}^{(N+1)\times K}$}, where \AB{$\mathbf{Q}$ is an orthogonal matrix of order $N+1$ and $\mathbf{R}$ is a $(N + 1) \times K$ trapezoidal matrix (and hence $\mathbf{R} \mathbf{R}^\top$ is a symmetric matrix of order $N+1$)}. Then 

\begin{equation} \label{eq:A_bar}
(\forall \vtheta \in \mathbb{R}^{N+1})\quad
2 \vL^\top \vL + \AB{\sigma_{\rm max}}(\vtheta)\mathbf{\Id}= 2\mathbf{Q} \mathbf{R} \mathbf{R}^\top \mathbf{Q}^\top + \AB{\sigma_{\rm max}}(\vtheta) \mathbf{\Id}.
\end{equation}

Let $\mathbf{U \Lambda U}^\top$ be the spectral decomposition of $\mathbf{RR}^\top$ where $\mathbf{U}$ is a matrix whose columns are the eigenvectors of $\mathbf{RR}^\top$, and $\mathbf{\Lambda} = \mathrm{Diag}(\lambda_1, \ldots, \lambda_{N+1})$ is diagonal whose elements are the associated eigenvalues. Substituting into \eqref{eq:A_bar}, we obtain 
\begin{equation}
(\forall \vtheta \in \mathbb{R}^{N+1})\quad
2\vL^\top \vL + \AB{\sigma_{\rm max}}(\vtheta)\mathbf{\Id} =  2\mathbf{Q} \mathbf{U}\mathbf{\Lambda} \mathbf{U}^\top \mathbf{Q}^\top + \AB{\sigma_{\rm max}}(\vtheta)\mathbf{\Id}.
\end{equation}

Since $\mathbf{Q}$ and $\mathbf{U}$ have orthogonal columns, considering orthogonal matrix $\mathbf{P} = \mathbf{Q} \mathbf{U}$ yields
\begin{equation} 
(\forall \vtheta \in \mathbb{R}^{N+1})\quad
\label{eq:A_bar_approx}
\mathbf{\overline{A}}(\vtheta)  = \mathbf{P} \hat{\mathbf{\Lambda}}(\vtheta) \mathbf{P}^\top,
\end{equation}
where $\hat{\mathbf{\Lambda}}(\theta)  = 2\mathbf{\Lambda} + \AB{\sigma_{\rm max}}(\vtheta) \mathbf{\Id}$. Consequently, constructing $\mathbf{\overline{A}}(\vtheta)$ as defined in \eqref{eq:A_bar_approx} allows us to compute its inversion efficiently as follows
\begin{equation} \label{eq:A_bar_approx_inv}
(\forall \vtheta \in \mathbb{R}^{N+1})\quad
\mathbf{(\overline{A}(\vtheta))}^{-1} = \mathbf{P} \hat{\mathbf{\Lambda}}(\vtheta)^{-1} \mathbf{P}^\top.
\end{equation}

\begin{remark}
	The proposed approach for computing the inverse of curvature matrix approximation might be even more efficient if $K<N$. Indeed, the ``thin" QR factorization leads to quickly computing the spectral decomposition $\mathbf{U} \mathbf{\Lambda} \mathbf{U}^\top$ of the smaller $K\times K$ matrix $\mathbf{RR}^\top$. 
\end{remark}

\subsubsection{Subspace acceleration}

Another approach for reducing the complexity of \eqref{eq:fullMM}, without jeopardizing its convergence properties, is to resort to a subspace acceleration technique. The method then reads
\begin{align}
\label{eq:subMM}
(\forall n \in \mathbb{N})\quad 
\vtheta^{(n+1)} &= \vtheta^{(n)} - \vD^{(n)} ((\vD^{(n)})^\top \vA(\vtheta^{(n)})\vD^{(n)})^{\dagger} (\vD^{(n)})^\top \nabla \Phi(\vtheta^{n)}).
\end{align}
Hereabove, $\dagger$ stands for the pseudo-inversion, and $\vD^{(n)} \in \mathbb{R}^{(N+1) \times M_n}$ with $M_n \geq 1$ (typically small), is the so-called subspace matrix. A standard choice is
\begin{equation}
\label{eq:3MG}
(\forall n \in \mathbb{N}) \quad \vD^{(n)} = \left[ -\nabla \Phi(\vtheta^{(n)})\; |\; \vtheta^{(n)} - \vtheta^{(n-1)} \right],
\end{equation}
(i.e., $M_n = 2$), with the convention $\vtheta^{(0)} = \mathbf{0}$, which leads to the 3MG (MM Memory Gradient) algorithm. Another 
simplest possibility is 
\begin{equation}
(\forall n \in \mathbb{N}) \quad \vD^{(n)} =  -\nabla \Phi(\vtheta^{(n)}),
\end{equation}
(i.e., $M_n = 1$) which results in a gradient descent technique with varying stepsize
\begin{align}
\label{eq:gradientMM}
(\forall n \in \mathbb{N})\quad
\vtheta^{(n+1)} &= \vtheta^{(n)} - \frac{\nabla \Phi(\vtheta^{n)})^\top \nabla \Phi(\vtheta^{n)})}{\nabla \Phi(\vtheta^{n)})^\top \vA(\vtheta^{(n)}) \nabla \Phi(\vtheta^{n)})} \nabla \Phi(\vtheta^{n)}).
\end{align}
Convergence of the iterates produced by \eqref{eq:subMM} to a stationary point of $\Phi$ is shown in \cite{chouzenoux2013majorize} under mild assumptions. Convergence to the (unique) solution to~\eqref{eq:sqHingeRef} is obtained when we additionally assume that the potential function $\varphi$ is convex and $\eta > 0$. Interestingly, the previously introduced schemes \eqref{eq:gradientDescent} and \eqref{eq:fullMM} can both be viewed as special cases of \eqref{eq:subMM}, and thus inherit the same convergence properties.

\subsubsection{Convergence result}

\textcolor{black}{Let us now state the theoretical convergence guaranties for the MM quadratic method \eqref{eq:fullMM} and its variants.}

\begin{theorem}
	\textcolor{black}{
		Let $\varphi$ given by \eqref{eq:l1Smooth} or \eqref{eq:l0Smooth}. Let $( \vtheta^{(n)})_{n \in \mathbb{N}}$ be generated either by \eqref{eq:fullMM}, or \eqref{eq:subMM}-\eqref{eq:3MG}, or  \eqref{eq:gradientMM}. Then, $( \vtheta^{(n)})_{n \in \mathbb{N}}$ converges to a stationary point of $\Phi$ in \eqref{eq:sqHingeRef}. Moreover, if $\eta>0$ and $\varphi$ is given by \eqref{eq:l1Smooth}, $\Phi$ is strongly convex, and $( \vtheta^{(n)})_{n \in \mathbb{N}}$ converges to its unique minimizer.
	}
\end{theorem}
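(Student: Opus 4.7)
The plan is to recognize that all three iterative schemes \eqref{eq:fullMM}, \eqref{eq:subMM}--\eqref{eq:3MG}, and \eqref{eq:gradientMM} are particular instances of the MM subspace recursion whose convergence is analysed in \cite{chouzenoux2013majorize}: \eqref{eq:fullMM} corresponds to the full-space choice $\vD^{(n)}=\vI$, while \eqref{eq:3MG} and the single-column variant of \eqref{eq:gradientMM} are already of the required form. Therefore the first statement reduces to checking the standing hypotheses of that convergence theorem for our $\Phi$ and our curvature matrix $\vA(\cdot)$ of \Cref{prp:sHingeMM}.

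First I would verify continuous differentiability and global gradient Lipschitz continuity of $\Phi$, which is exactly \Cref{prp:sHingeLip} together with the remark that both candidate potentials \eqref{eq:l1Smooth} and \eqref{eq:l0Smooth} are $a$-Lipschitz differentiable. Second, I would show that $\Phi$ has bounded sublevel sets: for the hyperbolic potential, $\varphi(w)\to\infty$ as $|w|\to\infty$ yields coercivity of $\tilde{f}$ in $\vw$; for the Welsh potential this role is played by the quadratic term $\tfrac{\eta}{2}\|\vw\|^2$ together with the standing assumption $\eta>0$ that the statement invokes in the hyperbolic sub-case. Boundedness of the sublevel sets in the bias coordinate $\beta$ is inherited from $g(\vL\vtheta)$, since $\vL$ contains a column of ones and the data term $g$ has quadratic growth in directions not in the kernel of $\vL$. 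Third, I would check that on any such sublevel set the curvature matrix satisfies $\nu_1\vI\preceq \vA(\vtheta)\preceq \nu_2\vI$ for some $0<\nu_1\leq\nu_2$: the lower bound comes from $2\vL^\top\vL\succeq 0$, $\psi\geq 0$, $\eta\geq 0$ and $\varepsilon>0$, while the upper bound comes from $\|\vL\|<\infty$ together with boundedness of $\psi$ on bounded sets. Fourth, I would invoke the Kurdyka--\L ojasiewicz property of $\Phi$: the hyperbolic potential is semi-algebraic, the Welsh potential is analytic and definable in the o-minimal structure $\mathbb{R}_{\mathrm{an},\exp}$, and $\rhinge^2$ is piecewise polynomial, hence $\Phi$ is definable and therefore KL at every critical point. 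These four ingredients fed into \cite[Thm.~4.1]{chouzenoux2013majorize} yield convergence of the whole sequence to a stationary point of $\Phi$, uniformly across the three algorithmic variants.

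For the second assertion (strong convexity and uniqueness), assume $\eta>0$ and $\varphi$ given by \eqref{eq:l1Smooth}. The hyperbolic potential is convex, so $\tilde{f}$ is $\eta$-strongly convex in $\vw$, and $g\circ\vL$ is convex because $\rhinge^2$ is convex. On any $\vtheta=[\vw^\top\;\beta]^\top$, strict positive curvature of $\Phi$ along pure-$\beta$ directions follows from positive definiteness of $\vL^\top\vL$ on that axis under the (standard and mild) assumption that the training set contains at least one misclassified or margin-violating sample from both classes; combined with the $\eta$-strong convexity on the $\vw$-block this gives strong convexity of $\Phi$ on $\RR^{N+1}$. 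The unique minimizer then exists by coercivity, and since the sequence $(\vtheta^{(n)})$ has already been shown to converge to a stationary point, it must converge to this unique minimizer.

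The part I expect to be the main obstacle is the careful handling of coercivity and of the lower curvature bound in the bias direction $\beta$, which is not regularized. Both points are resolved under the mild non-degeneracy condition on the dataset already implicit in the SVM formulation, and once this is granted the remainder of the proof is a direct application of the KL-based MM-subspace convergence machinery of \cite{chouzenoux2013majorize}.
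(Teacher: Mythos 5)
Your proposal follows exactly the same route as the paper's proof: both reduce the claim to the convergence theorem for MM quadratic subspace algorithms in \cite{chouzenoux2013majorize}, after observing that \eqref{eq:fullMM}, \eqref{eq:subMM}--\eqref{eq:3MG} and \eqref{eq:gradientMM} are all special cases of that scheme, that $\Phi$ is Lipschitz differentiable (\Cref{prp:sHingeLip}), that \Cref{prp:sHingeMM} supplies the quadratic tangent majorant, and that $\Phi$ satisfies the Kurdyka--\L{}ojasiewicz inequality for both potentials. You actually verify more of the standing hypotheses (bounded sublevel sets, two-sided bounds on $\vA(\cdot)$) than the paper, which states the argument in three lines. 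The one step that does not hold up is your justification of strong convexity: along the $\beta$-axis the Hessian of $g(\vL\cdot)$ is $2\sum_{k}\mathsf{1}_{v_k<1}$, not the corresponding entry of $2\vL^\top\vL$, and this quantity vanishes at any $\vtheta$ for which every sample satisfies $y_k(\vw^\top\vx_k+\beta)\geq 1$ (e.g., for strictly separable data), so no $\vtheta$-independent dataset condition can yield \emph{global} strong convexity of $\Phi$ in the unregularized bias direction; the paper asserts this strong convexity without proof, so the weakness is shared, but your patch via ``at least one margin-violating sample from both classes'' is a condition that depends on $\vtheta$ and does not close the gap.
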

\begin{proof}
	\textcolor{black}{
		Function $\Phi$ in \eqref{eq:sqHingeRef} is Lipschitz differentiable, by Proposition 1. Moreover, it satisfies Kurdika-\L{}ojasewicz inequality \cite{Attouch11} for $\varphi$ given by  \eqref{eq:l1Smooth} or \eqref{eq:l0Smooth}. 
		The proof results directly from \cite[Theo.3]{chouzenoux2013majorize}, using Proposition 2, and noticing than \eqref{eq:fullMM}, \eqref{eq:subMM}-\eqref{eq:3MG}, and  \eqref{eq:gradientMM}, are all particular cases of an MM quadratic subspace algorithm with one inner iteration.}
\end{proof}

\subsection{Stochastic minimization approaches}
When we face minimization with a particularly large dataset, it is often necessary to use a stochastic technique based on  minibatches \cite{B}. The same issue arises in the context of online learning \cite{onlinebottou} when the entire dataset is not completely available at the beginning of the learning process. Employing a stochastic method may also be convenient for the speed of convergence, especially in a warm-up phase (i.e., first iterations). {The stochastic gradient descent updates the current iterate by a gradient calculated on a single sample $(\vx_k,y_k)$ with randomly chosen $k \in \{1, \ldots,K\}$ in order to lighten the computational cost. For every  $k \in \{1, \ldots,K\}$, let us denote
	\begin{align}
	(\forall \vtheta = [\vw^\top \; \beta]^\top \in \RR^{N+1}) \quad 
	\Phi_k(\vtheta) & = \rho_{\text{hinge}}^2(y_k (\vw^\top \vx_k + \beta)) + f(\vw),\\
	& = \rho_{\text{hinge}}^2(\vL_k^\top \vtheta) + \widetilde{f}(\vtheta),
	\end{align}
	with $\vL_k \in \RR^{N+1}$ as the $k$-th row of $\vL$. We deduce the gradient for the $k$-th sample
	$$
	(\forall \vtheta \in \RR^{N+1}) \quad \nabla  \Phi_k(\vtheta) = \vL_k \max(2  (\vL_k^\top \vtheta -1),0) + \nabla \widetilde{f}(\vtheta).
	$$
} We present in \Cref{alg1} a basic constant stepsize implementation of the stochastic gradient descent method. 
\begin{algorithm}[htpb]
	\caption{Stochastic Gradient (SG) Method}
	\label{alg1}
	\begin{algorithmic}[1]
		\STATE Choose an initial iterate $\vtheta_0$, the stepsize $\alpha>0$ and the maximum number of iterates $maxit$.
		\FOR{$n \in \{0, \dots , maxit\}$}
		\STATE Draw at random an index $\kappa^{(n)} \in \{1,\ldots,K\}$.
		\STATE Compute the stochastic descent direction $\nabla \Phi_{\kappa^{(n)}} (\vtheta^{(n)})$.
		\STATE Set the new iterate as $\theta^{(n+1)} \gets \vtheta^{(n)}-\alpha \nabla \Phi_{\kappa^{(n)}} (\vtheta^{(n)})$
		\ENDFOR
	\end{algorithmic}
\end{algorithm}

In the same fashion, we can also adopt Momentum \cite{loizou} and AdaM \cite{kingma} methods, described in \Cref{momentum} and \Cref{Adam}, respectively.
\begin{algorithm}[H]
	\caption{Momentum}
	\label{momentum}
	\begin{algorithmic}[1]
		\STATE Choose an initial iterate $\vtheta^{(0)}$, the stepsize $\alpha>0$, the maximum number of iterates $maxit$ and $\beta$ $\in [0,1)$
		\STATE Initialize $\vm_0 \leftarrow 0$
		\FOR{$n \in \{1, \dots , maxit\}$}
		\STATE Draw at random an index $\kappa^{(n)} \in \{1,\ldots,K\}$.
		\STATE Compute the stochastic descent direction $\nabla \Phi_{\kappa^{(n)}} (\vtheta^{(n)})$.
		\STATE $\vm^{(n+1)}\leftarrow\beta  \vm^{(n)}+ \nabla \Phi_{\kappa^{(n)}} (\vtheta^{(n)})$
		\STATE $\vtheta^{(n)} \gets \vtheta^{(n-1)}-\alpha \vm^{(n+1)}$
		\ENDFOR
	\end{algorithmic}
\end{algorithm}
\begin{algorithm}[H]
	\caption{Adam}
	\label{Adam}
	\begin{algorithmic}[1]
		\STATE Choose an initial iterate $\vtheta_0$, the stepsize $\alpha>0$, the maximum number of iterates $maxit$, $\hat{\epsilon}$, $\beta_1$ and $\beta_2 \in [0,1)$;
		\STATE Initialize $\vm^{(0)} \leftarrow 0$, $\vv^{(0)} \leftarrow 0$
		\FOR{$n \in \{1, \dots , maxit\}$}
		\STATE Draw at random an index $\kappa^{(n)} \in \{1,\ldots,K\}$.
		\STATE Compute the stochastic descent direction $\nabla \Phi_{\kappa^{(n)}} (\vtheta^{(n)})$.
		\STATE $\vm^{(n)} \gets\beta_1  m^{(n-1)}+(1-\beta_1) \nabla \Phi_{\kappa^{(n)}} (\vtheta^{(n)})$
		\STATE $\vv^{(n)} \gets \beta_2  \vv^{(n-1)}+(1-\beta_2) \nabla \Phi_{\kappa^{(n)}} (\vtheta^{(n)}) \otimes \nabla \Phi_{\kappa^{(n)}} (\vtheta^{(n)}) $
		\STATE $\alpha^{(n)}=\alpha \displaystyle\frac{\sqrt{1-\beta_2^n}}{(1-\beta_1^n)}$
		\STATE $\vtheta^{(n)} \gets \vtheta^{(n+1)}-\alpha^{(n)}  {\vm}^{(n+1)} \oslash(\sqrt{\vv^{(n+1)}}+\hat{\epsilon})$
		\ENDFOR
	\end{algorithmic}
\end{algorithm}
In \Cref{Adam}, $\otimes$ in step 7 denotes the element-wise product, and $\oslash$ in step 9 is the element-wise division. 

{\begin{remark}
		For simplicity, only one index $\kappa^{(n)}$ is selected at each iteration of the above schemes. One may employ the idea of \emph{minibatch}, where a set $\mathcal{B}$ of $B$ indexes is randomly chosen, and the descent direction is given by the weighted sum of the gradients. For example, step 5 in \Cref{alg1} becomes
		$$
		\vtheta^{(n+1)} \gets \vtheta^{(n)}-\alpha \frac1B\sum_{i\in \mathcal{B}}\nabla \Phi_i(\vtheta^{(n)})
		$$
\end{remark}}
Algorithms \ref{alg1}, \ref{momentum} and \ref{Adam} are effective when the hyperparameters are fine-tuned. Several papers in the literature investigate how to develop reliable stepsize selection strategies in stochastic methods, mainly in an adaptive manner \cite{LOD, franchini,LISA}. Details on the hyperparameter
choice will be given in \Cref{sec:numexp}.

\subsubsection{Hybrid approach}

Considering the practical benefits of stochastic methods, and while keeping in mind their convergence-related constraints, we propose to introduce a hybrid strategy. It consists in using stochastic methods to minimize the objective function for a preset $\iota \in \mathbb{N}^*$ number of iterations, and, thus, taking the advantage of their initial learning speed characteristic. After this phase (the \emph{warm-up}), the iterate $\vtheta^{(\iota)}$ obtained from the stochastic methodology is used as an initial point of the deterministic method, benefiting from more stable convergence. Special attention should be paid to the choice of $\iota$, which must result from a trade-off between the benefit offered by the initial speedup of stochastic methods and the convergence properties of deterministic methods. The choice of $\iota$ will also be discussed in \Cref{sec:numexp}. 

\begin{remark}
	From the perspective of convergence guarantees, this warm-up phase has no impact, since it is equivalent to choosing a specific initial point $\vtheta^{(0)}$.
\end{remark}

\section{Numerical Experiments}
\label{sec:numexp}
This section is devoted to numerically assessing the performance of the proposed methods. In particular, we consider three different datasets summarized in \Cref{tab:data}. We split each dataset into training and testing sets, and we use $80\%$ of the elements for the training set and the remaining $20\%$ for the testing set.
\begin{table}[htbp]
	\begin{center}
		
		\begin{tabular}{l|l|c|c}
			dataset & $N+1$ & $K_{\text{training}}$ & $K_{\text{testing}}$ \\ \toprule
			\texttt{a1a}     & 120 & 1284                & 321                \\ \midrule
			\texttt{cina0}   & 133 &       12827          & 3206              \\ \midrule
			\texttt{w8a}     & 301 & 39800                &  9949             \\ \midrule
		\end{tabular}
		
		\caption{Data set characteristics.}
		\label{tab:data}
	\end{center}
\end{table}

The datasets \texttt{a1a} and \texttt{w8a} can be found at \cite{CC01a}, while \texttt{cina0} is available at \cite{Cina0}. We minimize the loss in \cref{eq:sqHingeRef} with three different choices for the regularizer term $\tilde{f}(\vtheta)$, namely $\varphi = 0$ (i.e., squared $l_2$ norm regularization), or $\varphi$ equal to the potential either \eqref{eq:l1Smooth} or \eqref{eq:l0Smooth}. \GF{We emphasize that the case when a squared $l_2$ norm is adopted as the regularizer in addition to the fidelity term in the loss is entirely comparable to the formulation of SVM's primary problem; thus ensuring an experimental comparison with standard SVM as well.}

\begin{remark}[Hyperparameters setting]
	As detailed in \Cref{sec:MMalg}, the choice of proper hyperparameters is crucial for the speed and convergence of stochastic methods. Starting with stochastic methods (SG, Momentum, AdaM) special attention was paid to the choice of the learning rate, which was manually set after an exhaustive search for the optimal one in each method. \GF{All other hyperparameters were chosen as default ones found in the literature.
		In general, $100$ iterations in the deterministic case (or epochs in the stochastic case) appeared enough for all methods. In the case of hybrid method we consider a total of $100$ epochs+iterates. In the hybrid case, we set the warm-up parameter to $\iota=10$, considering it a good trade-off between the speed of stochastic methods and the stability of deterministic ones. In a nutshell, we perform $10$ stochastic epochs for the warm-up phase, and then $90$ deterministic steps (or epochs, which is the same in the deterministic case).}
	
	\noindent Moreover, we empirically set $\eta=10^{-4}$ ans $\lambda=\delta=0$ to experiment only $\ell_2$ regularizer, while $\lambda=\delta=10^{-4}$ and $\eta=0$ is used for other regularizers, which lead to fair performance on all datasets. \GF{A discussion on the influence of $\lambda$ on the results is provided at the end of the section.}
\end{remark}

\subsection{Results}
To test the effectiveness of the methods on the datasets in the \Cref{tab:data}, we will compare the results of various methods associated with different regularizers. In particular, we start by comparing stochastic methods to choose which one is most suitable for the warm-up phase.
\begin{figure}
	\centering
	{\includegraphics[width=0.6\linewidth]{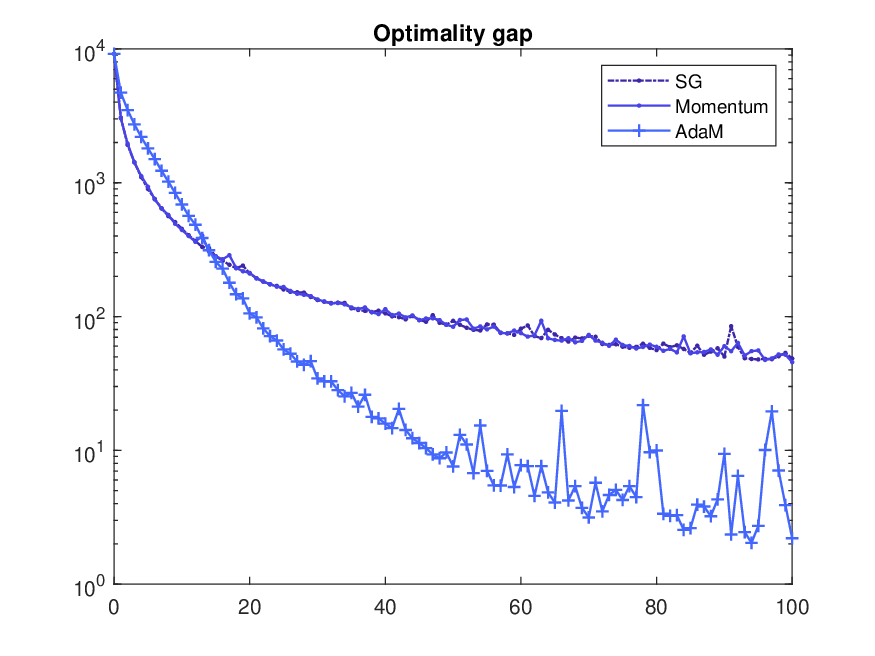}}
	\caption{Stochastic methods with \texttt{a1a} dataset and smooth $l_1$ norm like regularizer.}
	\label{fig:stochastic}
\end{figure}

We define \emph{optimality gap} as the difference between the value of the loss function at the point and the function calculated at an estimate of its minimum. This estimate is derived by letting a deterministic method run for thousands of iterates. 

In \Cref{fig:stochastic} we compare the optimality gap of \texttt{a1a} dataset with smooth $l_1$ norm regularizer. As we can see, the best method at the beginning is AdaM: the behaviour on the other datasets is similar, hence we employ AdaM for the warm-up phase.

\subsubsection{Optimality gap}
\label{sec:optgap}
The methods we are going to compare are as follows:
\begin{itemize}
	\item Gradient descent approach \eqref{eq:gradientDescent} called FULL GRADIENT (FG)
	\item MM quadratic approach \eqref{eq:fullMM} with approximated inverse curvature \eqref{eq:A_bar_approx_inv} called MM INVERSION (MM I)
	\item MM quadratic approach \eqref{eq:fullMM} with approximated inverse curvature \eqref{eq:A_bar_approx_inv} and an initial warm up of $10$ AdaM iterates called HYBRID MM INVERSION (H MM I)
	\item MM quadratic approach \eqref{eq:fullMM} with exact curvature
	\item MM quadratic approach \eqref{eq:fullMM} with exact curvature and an initial warm-up of $10$ AdaM iterates called HYBRID MM (H MM)
	\item MM quadratic with subspace acceleration method \eqref{eq:subMM} called SUBSPACE (SUB)
	\item MM quadratic with subspace acceleration method \eqref{eq:subMM} and an initial warm-up of $10$ AdaM iterates called HYBRID SUBSPACE (H SUB).
\end{itemize}
In the plots of \Cref{fig:optimality}  we consider on the $x$-axis the number of epochs, where an epoch in the deterministic framework means an iterate, whilst in the stochastic framework is a full vision of the dataset. In the $y$-axis we consider the optimality gap.

\begin{figure}\centering
	\newcommand{\factor}{0.45}
	\subfigure[\texttt{a1a} with Welsh potential.]{\includegraphics[width =\factor\textwidth]{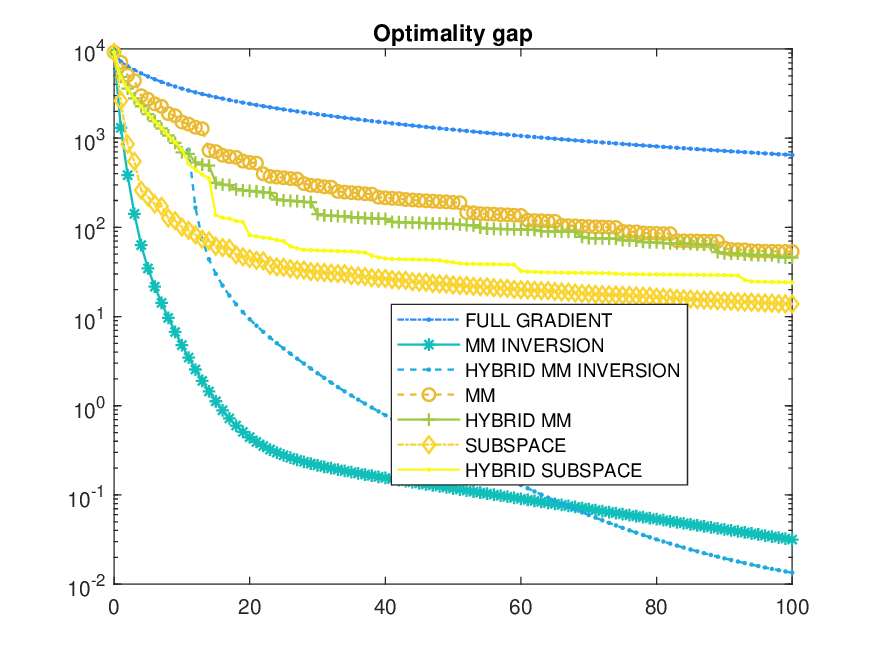}}\hfill\subfigure[\texttt{a1a} with $l_2$ norm only.]{\includegraphics[width =\factor\textwidth]{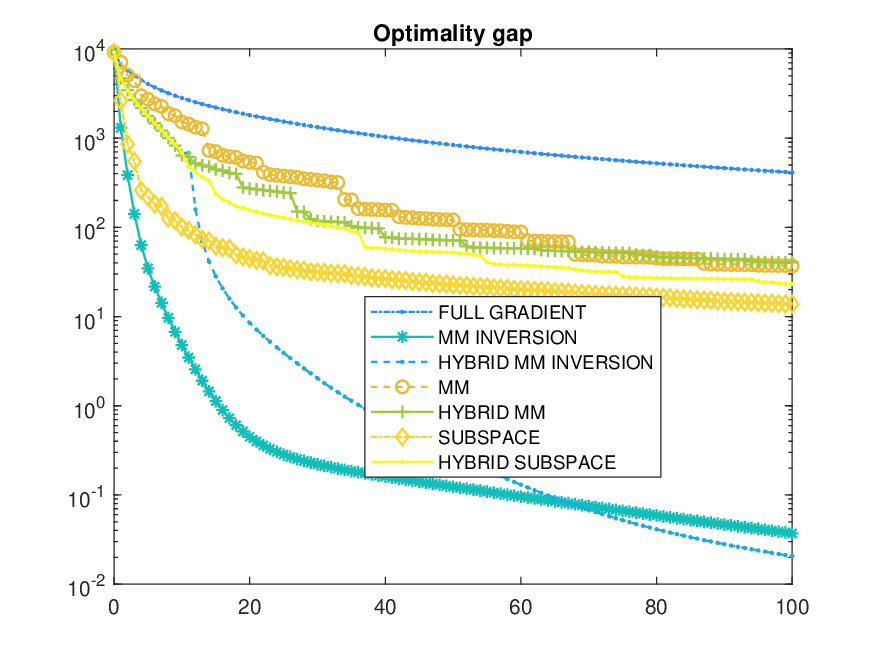}}
	
	\subfigure[\texttt{w8a} with hyperbolic potential.]{\includegraphics[width =\factor\textwidth]{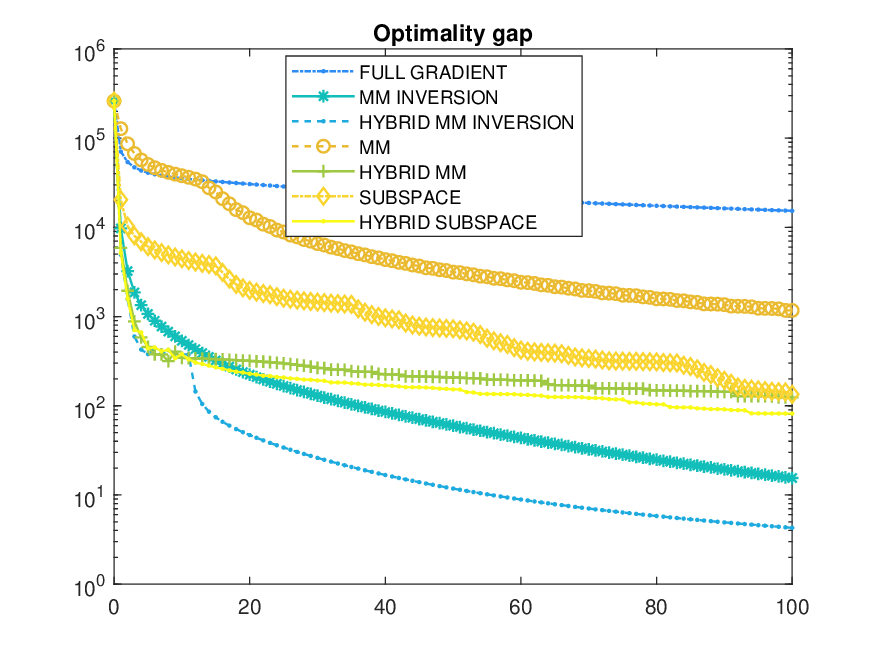}}\hfill\subfigure[\texttt{w8a} with $l_2$ norm only.]{\includegraphics[width =\factor\textwidth]{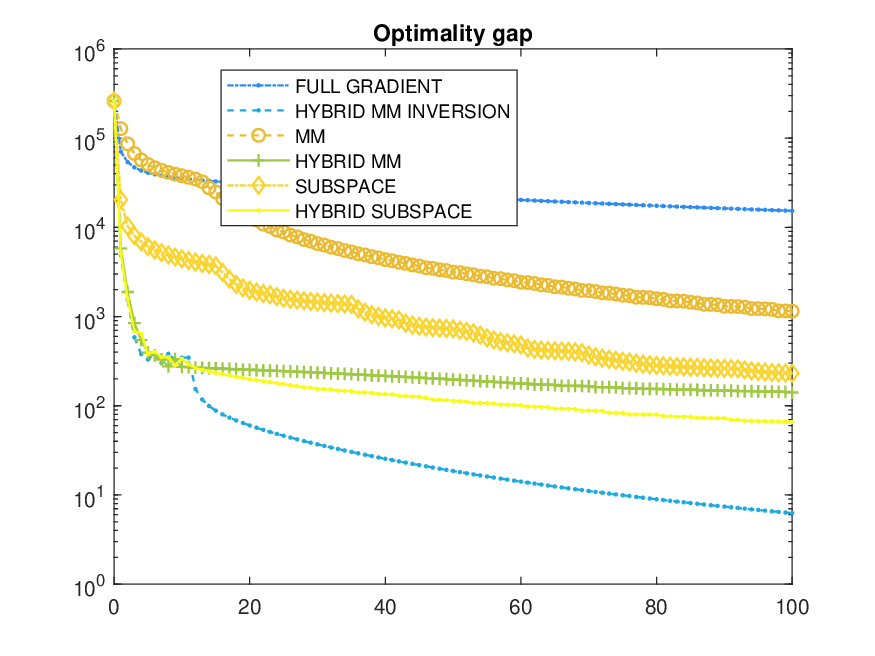}}
	
	\subfigure[\texttt{cina0} with Welsh potential.]{\includegraphics[width =\factor\textwidth]{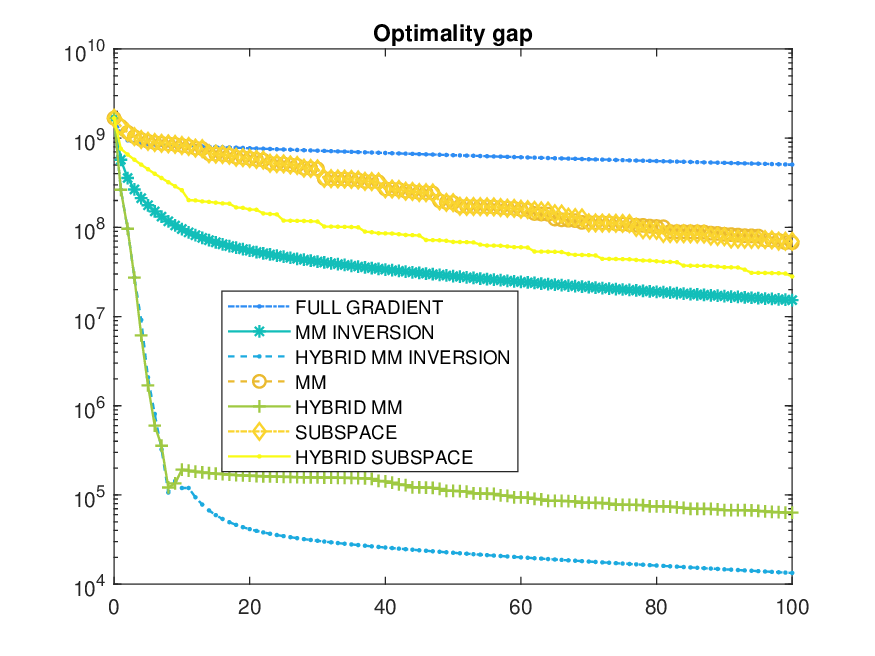}}\hfill\subfigure[\texttt{cina0} with $l_2$ norm only.]{\includegraphics[width =\factor\textwidth]{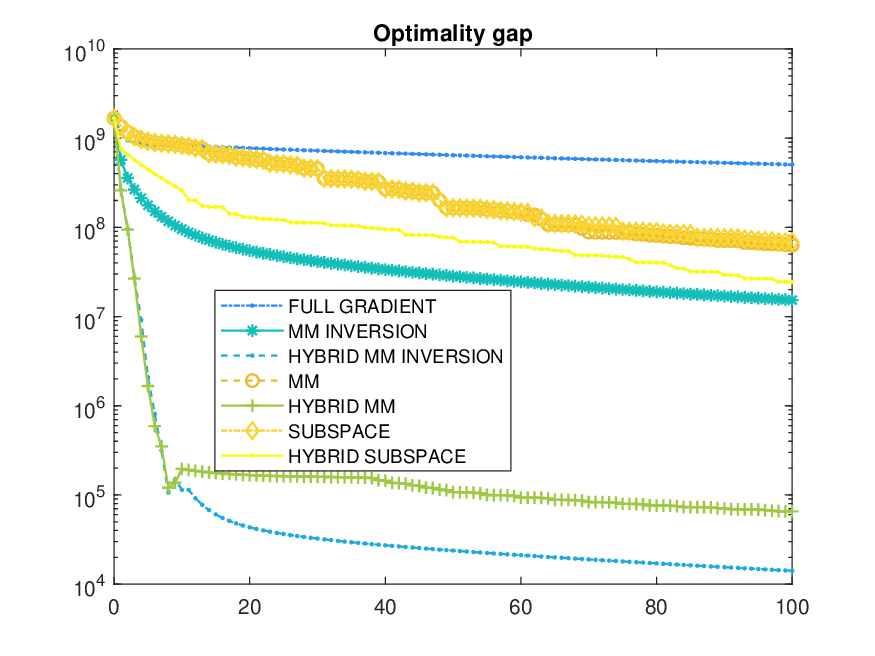}}

	\caption{Optimality gap for different dataset-regularizer combinations. In panel (d) the method MM INVERSION has been removed since it appeared instable.}
	\label{fig:optimality}
\end{figure}

As we can observe from \Cref{fig:optimality}, hybrid methods outperform all the deterministic methods: the warm up strategy pays off even when a low number of warm up iteration $\iota$ is set. Moreover, MM methods seems to exploit the second order information of the functional, allowing an evident boost towards the solution. Among these mixed strategies (MM plus warm up), HYBRID MM INVERSION overcomes all the other, reaching the best optimality gap among all the coupling dataset-regularizer.    

\subsubsection{Performance measure}
In the previous section, we presented results for the optimality gap, calculated on the training set. Here we present performance measures calculated on the test set, to emphasize the generalization ability on unseen data of the proposed method. In this regard, we considered four performance measures well known in the literature: accuracy, precision, recall, and F1-score. In our framework of binary classification, we denote with "positive" and "negative" the two classes, corresponding to the labels 1 and -1 in \cref{eq:linClass}, to be consistent with standard definitions of the previous measures. True Positive (TP) denotes the number of elements of the datasets that are correctly classified as "positive", while True Negative (TN) is the number of elements that are correctly classified as "negative". These two numbers denote the correct classifications. On the other hand, False Positive (FP) and False Negative (FN) denote the elements that are actually negative and are classified as "positive", and vice versa. This terminology stems from classification tasks in medicine and biology. The performance measures considered can be expressed as

\begin{eqnarray*}
	\mbox{Accuracy} &= \displaystyle\frac{TP+TN}{K_{\text{testing}}},\qquad
	\mbox{Precision} &= \displaystyle\frac{TP}{TP+FN},\\\\
	\mbox{Recall} &= \displaystyle\frac{TP}{TP+FN},\qquad
	\mbox{F}_1 \mbox{-score} &= \displaystyle\frac{TP}{TP+\displaystyle\frac{FN+FP}{2}}.
\end{eqnarray*}

\begin{table} [htpb]
	\centering
	\begin{tabular}{l|c|c|c|c|c|c|c|c}
		
		& Reg.  & FG & MMI & H MMI & MM & H MM & SUB & H SUB\\
		\midrule
		\multirow{3}{*}{Accuracy} & {\eqref{eq:l1Smooth}}  & 
		0.7944 & 0.8100 & 0.8100 & 0.8193 & \textbf{0.8255} & 0.8193 & \textbf{0.8255}\\
		& {\eqref{eq:l0Smooth}} & 
		0.7788 & 0.8037 & 0.8100 & 0.8224 & 0.8224 & 0.8193 & 0.8126\\
		& $\times$ & 
		0.7944 & 0.8100 & 0.8100 & 0.8224 & 0.8224 & 0.8193 & 0.8193\\        
		\midrule
		\multirow{3}{*}{Recall} & {\eqref{eq:l1Smooth}}  & 0.6279 & 0.6753 & 0.6753 & 0.6974 & 0.7013 & 0.6974 & \textbf{0.7105}\\
		& {\eqref{eq:l0Smooth}} & 0.6000 & 0.6623 & 0.6753 & 0.7013 & 0.7013 & 0.6974 & 0.6883\\
		& $\times$ & 0.6279 & 0.6753 & 0.6753 & 0.6962 & 0.7067 & 0.6974 & 0.6976 \\
		\midrule
		\multirow{3}{*}{Precision} & {\eqref{eq:l1Smooth}}  & 0.6136 & 0.5909 & 0.5909 & 0.6023 & 0.6136 & 0.6023 & 0.6136\\
		& {\eqref{eq:l0Smooth}} & 0.5795 & 0.5795 & 0.5909 & 0.6136 & 0.6136 & 0.6023 & 0.6023\\
		& $\times$ & 0.6136 & 0.5909 & 0.5909 & \textbf{0.6250}& 0.6023 & 0.6023 & 0.6023 \\
		\midrule
		\multirow{3}{*}{F$_1$} & (8) & 0.6207 & 0.6303 & 0.6303 & 0.6463 & \textbf{0.6585} & 0.6463 & 0.6585\\
		& {\eqref{eq:l0Smooth}} & 0.5896 & 0.6182 & 0.6303 & 0.6545 & 0.6545 & 0.6463 & 0.6424\\
		& $\times$ & 0.6207 & 0.6303 & 0.6303 & 0.6587 & 0.6503 & 0.6463 & 0.6463 \\
		\bottomrule
	\end{tabular}\caption{Performance measures for \texttt{a1a} dataset. \eqref{eq:l1Smooth} and \eqref{eq:l0Smooth}  refers to the choice of the regularization functional, while `$\times$' denotes the case where $\varphi = 0$ (i.e., only $\ell_2$-norm regularization is used). The names of the methods refer to the list presented at the beginning of \Cref{sec:optgap}.}\label{tab:a1a}
\end{table}

\begin{table}[htpb]
	\centering
	\begin{tabular}{l|c|c|c|c|c|c|c|c}
		
		& Reg.  & FG & MMI & H MMI & MM & H MM & SUB & H SUB\\
		\midrule
		\multirow{3}{*}{Accuracy} & {\eqref{eq:l1Smooth}}  & 0.9607 & 0.9916 & \textbf{0.9919} & 0.9876 & 0.9912 & 0.9914 & 0.9916\\
		& {\eqref{eq:l0Smooth}} & 0.9602 & 0.9837 & 0.9918 & 0.9876 & 0.9916 & 0.9911 & 0.9913\\
		& $\times$ & 0.9607 & 0.6471 & 0.9917 & 0.9876 & 0.9912 & 0.9910 & 0.9918\\        
		\midrule
		\multirow{3}{*}{Recall} & (8) & 0.1362 & 0.8000 & \textbf{0.8205} & 0.6299 & 0.7931 & 0.8182 & 0.8051\\
		& (9) & 0.1341 & 0.4837 & 0.8033 & 0.6299 & 0.7813 & 0.8073 & 0.7717\\
		& $\times$ & 0.1362 & 0.0187 & 0.8067 & 0.6299 & 0.7982 & 0.8173 & 0.8136\\        
		\midrule
		\multirow{3}{*}{Precision} & {\eqref{eq:l1Smooth}}  & 0.2821 & 0.6154 & 0.6154 & 0.5128 & 0.5897 & 0.5769 & 0.6090\\
		& {\eqref{eq:l0Smooth}} & 0.2821 & 0.5705 & 0.6282 & 0.5128 & \textbf{0.6410} & 0.5641 & 0.6282\\
		& $\times$ & 0.2821 & 0.4167 & 0.6154 & 0.5128 & 0.5833 & 0.5449 & 0.6154\\        
		\midrule
		\multirow{3}{*}{F$_1$} & (8) & 0.1837 & 0.6957 & 0.7033 & 0.5654 & 0.6765 & 0.6767 & 0.6934\\
		& {\eqref{eq:l0Smooth}} & 0.1818 & 0.5235 & \textbf{0.7050} & 0.5654 & 0.7042 & 0.6642 & 0.6926\\
		& $\times$ & 0.1837 & 0.0357 & 0.6982 & 0.5654 & 0.6741 & 0.6538 & 0.7007\\      
		\bottomrule
	\end{tabular}
	\caption{Performance measures for \texttt{w8a} dataset. \eqref{eq:l1Smooth} and \eqref{eq:l0Smooth}  refers to the choice of the regularization functional, while `$\times$' denotes $\varphi = 0$ (i.e., only $\ell_2$-norm regularization). The names of the methods refer to the list presented at the beginning of \Cref{sec:optgap}.}\label{tab:w8a}
\end{table}

\begin{table}[htbp]
	\centering
	\begin{tabular}{l|c|c|c|c|c|c|c|c}
		
		& Reg.  & FG & MMI & H MMI & MM & H MM & SUB & H SUB\\
		\midrule
		\multirow{3}{*}{Accuracy} & {\eqref{eq:l1Smooth}}  & 0.7748 & 0.9195 & 0.9245 & 0.8531 & 0.9148 & 0.8525 & 0.8668\\
		& {\eqref{eq:l0Smooth}} & 0.7748 & 0.9195 & 0.9245 & 0.8534 & 0.9148 & 0.8531 & 0.8696\\
		& $\times$ & 0.7748 & 0.9195 & \textbf{0.9248} & 0.8534 & 0.9145 & 0.8531 & 0.8677\\ \midrule
		\multirow{3}{*}{Recall} & {\eqref{eq:l1Smooth}}  & 0.5615 & 0.8000 & 0.8608 & 0.7138 & 0.8261 & 0.7121 & 0.7378\\
		& {\eqref{eq:l0Smooth}} & 0.5615 & 0.8501 & 0.8600 & 0.7178 & 0.8253 & 0.7138 & 0.7500\\
		& $\times$ & 0.5615 & 0.8501 & \textbf{0.8610} & 0.7121 & 0.8236 & 0.7133 & 0.7416\\ \midrule
		\multirow{3}{*}{Precision} & {\eqref{eq:l1Smooth}}  & 0.5845 & 0.6154 & 0.8442 & 0.7198 & 0.8490 & 0.7198 & 0.7512\\
		& {\eqref{eq:l0Smooth}} & 0.5845 & 0.8357 & 0.8454 & 0.7126 & 0.8502 & 0.7198 & 0.7428\\
		& $\times$ & 0.5845 & 0.8357 & 0.8454 & 0.7258 & \textbf{0.8514} & 0.7210 & 0.7488 \\
		\midrule
		\multirow{3}{*}{F$_1$} & {\eqref{eq:l1Smooth}}  & 0.5728 & 0.6957 & 0.8524 & 0.7168 & 0.8374 & 0.7159 & 0.7445\\
		& {\eqref{eq:l0Smooth}} & 0.5728 & 0.8429 & 0.8526 & 0.7152 & 0.8376 & 0.7168 & 0.7464\\
		& $\times$ & 0.5828 & 0.8429 & \textbf{0.8531} & 0.7189 & 0.8373 & 0.7171& 0.7452 \\
		\bottomrule
	\end{tabular}
	\caption{Performance measures for \texttt{cina0} dataset. \eqref{eq:l1Smooth} and \eqref{eq:l0Smooth}  refers to the choice of the regularization functional, while `$\times$' denotes $\varphi = 0$ (i.e. only $\ell_2$-norm regularization). The names of the methods refer to the list presented at the beginning of \Cref{sec:optgap}.}\label{tab:cina0}
\end{table}

\begin{table}[htpb]
	\centering
	\textcolor{black}{
		\begin{tabular}{l|c|c|c|c|c|c|c|c}
			& Reg.  & FG & MMI & H MMI & MM & H MM & SUB & H SUB\\
			\midrule
			\multirow{3}{*}{a1a} & {\eqref{eq:l1Smooth}}  & 0.010434 & 0.161774 & 0.031648 & 0.161625 & 0.025913 & 0.036542 & 0.164651\\
			& {\eqref{eq:l0Smooth}} & 0.043637 & 0.186570 & 0.031504 & 0.160993 & 0.034640 & 0.043813 & 0.180505\\
			& $\times$ & 0.173047 & 1.769611 & 0.223276 & 1.680880 & 0.198365 & 0.211096 & 1.738608\\        
			\midrule
			\multirow{3}{*}{cina0} & (8) & 0.181315 & 1.857992 & 0.225515 & 1.808992 & 0.231394 & 0.224193 & 1.888193\\
			& (9) & 0.191336 & 1.986022 & 0.230983 & 2.019854 & 0.255824 & 0.297928 & 1.975059\\
			& $\times$ & 0.173007 & 1.698734 & 0.220615 & 1.702079 & 0.214507 & 0.249245 & 2.063324\\        
			\midrule
			\multirow{3}{*}{w8a} & {\eqref{eq:l1Smooth}}  & 1.098484 & 8.006783 & 1.843458 & 8.389577 & 1.112776 & 1.288889 & 10.018341\\
			& {\eqref{eq:l0Smooth}} & 1.074437 & 11.936826 & 20.957777 & 13.887202 & 1.180779 & 1.133178 & 13.653358\\
			& $\times$ & 1.060243 & 7.085468 & 1.596996 & 8.402317 & 1.279365 & 1.217616 & 8.584693\\             
			\bottomrule
		\end{tabular}
	}
	\caption{Time in second for all the datasets. \eqref{eq:l1Smooth} and \eqref{eq:l0Smooth}  refers to the choice of the regularization functional, while `$\times$' denotes $\varphi = 0$ (i.e., only $\ell_2$-norm regularization). The names of the methods refer to the list presented at the beginning of \Cref{sec:optgap}.}
	\label{tab:time}
\end{table}
\begin{table}[]
	\centering
	\textcolor{black}{
		\begin{tabular}{c|l|l|l|l|l}
			\hline
			$\lambda$            & sparsity & accuracy          & precision         & recall            & F1-score          \\ \hline
			$10^{-1}$    & 55/120       & \bf{0.8162} & \bf{0.6986} & 0.5795 & 0.6335 \\ \hline
			$10^{-2}$   & 35/120       & \bf{0.8162} & \bf{0.68838} & \bf{0.6023} & \bf{0.6424} \\ \hline
			$10^{-3}$  & 21/120       & 0.8100 & 0.6753 & 0.5909 & 0.6303 \\ \hline
			$10^{-4}$ & 18/120       & 0.8100 & 0.6753 & 0.5909 & 0.6303 \\ \hline
			$10^{-5}$ & 16/120       & 0.8100 & 0.6753 & 0.5909 & 0.6303 \\ \hline
		\end{tabular}
	}
	\caption{Sparsity ratio and classification metrics for the a1a dataset with \eqref{eq:l1Smooth} regularization functional, for different $\lambda$ choices.}
	\label{sparsity:a1a}
\end{table}
\begin{table}[]
	\centering
	\textcolor{black}{
		\begin{tabular}{c|l|l|l|l|l}
			\hline
			$\lambda$            & sparsity & accuracy          & precision         & recall            & F1-score          \\ \hline
			$10^{-1}$    & 32/133       & 0.9236 & 0.8577 & 0.8394 & 0.8509 \\ \hline
			$10^{-2}$   & 4/133       & 0.9242 & 0.8625 & 0.8406 & 0.8514 \\ \hline
			$10^{-3}$  & 0/133       & 0.9242 & 0.8571 & 0.8478 & 0.8525 \\ \hline
			$10^{-4}$ & 1/133        & \bf{0.9261} & \bf{0.8635} & 0.8478 & \bf{0.8556} \\ \hline
			$10^{-5}$ & 0/133        & 0.9258 & 0.8606 & \bf{0.8502} & 0.8554 \\ \hline
		\end{tabular}
	}
	\caption{Sparsity ratio and classification metrics for the cina0 dataset with \eqref{eq:l1Smooth} regularization functional, for different $\lambda$ choices.}
	\label{sparsity:cina0}
\end{table}
\begin{table}[]
	\centering
	\textcolor{black}{
		\begin{tabular}{c|l|l|l|l|l}
			\hline
			$\lambda$            & sparsity & accuracy          & precision         & recall            & F1-score          \\ \hline
			$10^{-1}$   & 249/300      & 0.9921 & 0.8469 & 0.6026 & 0.7041 \\ \hline
			$10^{-2}$   & 159/300      & \bf{0.9922} & \bf{0.8482} & 0.609 & \bf{0.709} \\ \hline
			$10^{-3}$  & 44/300      & 0.9921 & 0.8407 & 0.609 & 0.7063 \\ \hline
			$10^{-4}$ & 33/300     & 0.9919 & 0.8205 & \bf{0.6154} & 0.7033 \\ \hline
			$10^{-5}$ & 5/300        & 0.9919 & 0.8205 & \bf{0.6154} & 0.7033 \\ \hline
		\end{tabular}
	}
	\caption{Sparsity ratio and classification metrics for the w8a dataset with \eqref{eq:l1Smooth} regularization functional, for different $\lambda$ choices.}
	\label{sparsity:w8a}
\end{table}

In Tables \ref{tab:a1a}-\ref{tab:cina0} we report the results of the performance measures, each table referring to a different dataset. The second row identifies the type of regularizer. These tables clearly show how hybrid methods provide higher performances when a fixed number of iterations is selected: this confirms that this approach allows to start the deterministic method from a suitable initial point, reaching sooner (with respect to a deterministic method) a reliable estimation of the solution. This behaviour is clear from the plots in \Cref{fig:optimality}. This numerical results hence show that they are able to generalize and that they are particularly effective even on data not seen in the training phase. 

The choice of a suitable regularization functional induces more reliable results, considering all the performance indices: sparse-preserving functions are providing with higher scores among \texttt{a1a} and \texttt{w8a} datasets, while in \texttt{cina0} the $\ell_2$ penalty seems enough to get good results. 

\GF{In Table \ref{tab:time}, we report the computational time in seconds for training the various methods. When we refer to computational time, we mean the entire training phase on all the elements of the dataset, thus referring to the 100 epochs/iterations. All experiments were run on an Intel(R) Core(TM) i7-7700HQ CPU @ 2.80GHz 2.81 GHz. For completeness, all combinations of datasets and regularizers have been reported. The faster training is performed by FG, H MMI, H MM, and SUB approaches. The best compromise between performance and complexity is achieved by hybrid MM methods.}

\GF{We evaluate in Tables \ref{sparsity:a1a}-\ref{sparsity:w8a} the influence of setting parameter $\lambda$, when choosing the convex regularizer \eqref{eq:l1Smooth}. As expected, the sparsity (i.e., number of zero coefficients) in the retrieved coefficients increases with $\lambda$. We can also notice that the best classification metrics are obtained for an intermediary value of $\lambda$. This is in particular the case for a1a and w8a datasets. This emphasizes the important role of the introduced sparsifying penalty.}

\section{Conclusions}
\label{sec:concl}
This paper revisits existing approaches for training Support Vector Machines by considering modern developments around MM strategies. The novel family of proposed optimization methods
address formulations combining a square hinge loss data fidelity function with a smooth sparsity-promoting regularization functional. This combination results in a differentiable objective function, enabling the use of efficient optimization methods for training. The numerical tests performed on three datasets show that the proposed approaches provide reliable results in terms of accuracy, precision, recall, and F1 score and that a hybrid approach integrating some stochastic gradient iterations as a warm up provides an initial boost that leads to a better performance. The results demonstrate that this new approach for training SVMs can be used effectively in a variety of real-world applications, also in a big data context with the joint use of stochastic gradient methods.\\
A natural extension of this work would be to investigate multi-class formulations of SVMs.

\bibliographystyle{plain}
\bibliography{biblio}
\end{document}